\documentclass[11pt]{article}
\usepackage{mathrsfs}
\usepackage{amsmath}
\usepackage{amssymb}
%\usepackage{natbib}
%% Language and font encodings
\usepackage[english]{babel}
\usepackage[utf8x]{inputenc}
\usepackage[T1]{fontenc}

%\newtheorem{lemma}{Lemma}
%% Sets page size and margins
\usepackage[a4paper,top=3.5cm,bottom=3.5cm,left=3cm,right=3cm,marginparwidth=1.75cm]{geometry}

\usepackage{amsthm}
\usepackage{thmtools, thm-restate} 

\declaretheorem{theorem}
\declaretheorem{lemma}
\declaretheorem{corollary}

%% Useful packages
\usepackage{graphicx}
\usepackage[colorinlistoftodos]{todonotes}
\renewcommand{\H}{\mathcal{H}}
\newcommand{\K}{\mathcal{K_\mu}}
\newcommand{\Kn}{\mathcal{K_\nu}}
\newcommand{\Kh}{\mathcal{K}_\mu^{1/2}}
\newcommand{\F}{\mathcal{F}}
\newcommand{\I}{\mathcal{R_{\mu}}}
\newcommand{\A}{\mathcal{A}}
\newcommand{\T}{\mathcal{T}}
\newcommand{\LL}{{L^2_\mu}}
\newcommand{\LpL}{{L^p_\mu}}

\renewcommand{\S}{{\mathcal S}}
\newcommand{\bV}{\mathbf{V}}
\newcommand{\R}{\mathbb{R}}
\newcommand{\Br}{\mathcal{B}_R}
\DeclareMathOperator{\Tr}{Tr}
\DeclareMathOperator{\sign}{sign}
\DeclareMathOperator{\supp}{supp}

\newenvironment{customthm}[1]
  {\innercustomthm}
  {\endinnercustomthm}

\newtheorem{remark}{Remark}

\setlength{\abovedisplayskip}{-20pt}
\setlength{\belowdisplayskip}{-2pt}

\title{Approximation beats concentration? An approximation view  on inference with smooth radial kernels\footnote{This work appeared in Computational Learning Theory (COLT) 2018.}}
%\title{ An approximation view  on inference with smooth radial kernels }
\author{{Mikhail Belkin}\\
Ohio State University,\\ Department of Computer Science and Engineering}

\begin{document}
\setlength{\abovedisplayshortskip}{-.5\baselineskip plus 3pt}
\date{}
\maketitle

\begin{abstract}

%To do: (1) eigenvectors/eigenfunction (2) neural networks vc-dimension (3) randomness plays little role (4) different kernel embedding  \\
%--------------------------

Positive definite kernels and their associated Reproducing Kernel Hilbert Spaces provide a mathematically compelling and practically competitive framework for learning from data.

In this paper we take the approximation theory point of view to explore various aspects of  smooth kernels related to their inferential properties. We  analyze eigenvalue decay of  kernels operators and  matrices,  properties of eigenfunctions/eigenvectors and ``Fourier'' coefficients of functions in the kernel space restricted to a discrete set of data points.
We also investigate the fitting capacity of kernels,  giving explicit bounds on the fat shattering dimension of the balls in  Reproducing Kernel Hilbert spaces. 
Interestingly, the same properties that make kernels  very effective approximators for functions in their ``native'' kernel space,  also limit their capacity to represent arbitrary functions.  We discuss various implications, including those for gradient descent type methods.

It is important to note that most of our  bounds are measure independent.  Moreover,  at least in moderate dimension, the bounds for eigenvalues are much tighter than the bounds which can be obtained from the usual matrix concentration results. For example, we see that  eigenvalues of kernel matrices show nearly exponential decay with constants depending only on the kernel and the domain. We call this ``approximation beats concentration'' phenomenon as even when the data are sampled from a probability distribution, some of their aspects are better understood in terms of approximation theory.

%The purpose of this note is to discuss a number of implications of approximation theory 
%
%
%In this note we take an approximation theory point of view to analyze the power and limitations of smooth radial kernels, such as the popular Gaussian and Cauchy kernels in the context of the kernel methods in learning theory.
%
% This helps to explain success as well 
%
%The purpose of this note is to take a step toward approximation theory as a foundation for kernel methods.  We derive bounds on the spectral decay of kernels and kernel matrices and very simple bounds on the VC-$\gamma$ dimension of kernel spaces. 
%
%mention NN can fit arbitrary functions somewhere

%We derive bounds on the spectral decay 

%Much of the recent success of machine learning is due to success in large scale high-dimensional approximation using function classes such as neural networks. While traditionally much of the analysis has been statistical based on concentration bounds, it seems that approximation theory can provide new insights  different from those of traditional statistical analyses. 

\end{abstract}

\section{Introduction}

Modern supervised machine learning is largely based on Empirical Risk Minimization (ERM), a form of functional approximation. 
%Typical practical approaches involve fitting a function from a certain function class, for example, a neural networks of a certain architecture,  to given data.
%Much of the recent empirical work and significant theoretical investigations concentrated on deep neural networks.  
%In this paper would like to  revisit some properties of kernel machines with a view toward  modern machine learning. 
%methods based function spaces corresponding to  the so-called  Reproducing Kernel Hilbert Spaces (RKHS).
Kernel machines perform variants of ERM over Reproducing Kernel Hilbert Spaces (RKHS). RKHS, also known as {\it native spaces} in the approximation literature, are generalizations of Sobolev spaces and have many attractive mathematical and computational properties.  In particular, these spaces
 correspond to positive definite kernels, such as Gaussian, inverse multiquadrics, or Laplace kernels. Inference in these function spaces is analytically tractable, practically competitive  and often leads to  convex optimization problems, which can be viewed as linear methods in infinite dimensional Hilbert spaces. 
 %Moreover, kernels methods based on radial kernels, combine advantages of linear methods with nearest neighbor classification. 
 
 %Until the recent advent of neural networks, kernel machines were the method of choice for many practical data analysis problems. 
 
% 
% Inference using these function spaces has many desirable properties, typically leading to convex optimization problems that can be viewed as linear methods in infinite dimensional spaces. Moreover, kernels methods based on radial kernels, combine advantages of linear methods with nearest neighbor  classification/regression. 

In this paper we take a look at the properties of kernels from the approximation point of view. While there is  an extensive and diverse literature on  kernel methods  and their use in machine learning (including the books~\cite{scholkopf2001learning, shawe2004kernel, steinwart2008support}), we are aware of few works that use powerful results now available in the approximation theory literature, with the notable exception of ~\cite{rieger2010sampling}. 
There are a number of learning theory results based on certain assumption about kernel eigenvalue decay. However, there are few analyses showing that specific practically used kernels, e.g., Gaussian kernels, satisfy these assumptions. Moreover, the exact nature of the dependence on the underlying measure has not, to the best of our knowledge, been addressed in the literature. 
We feel that the approximation point of  view provides a rather different perspective on the properties of kernel  methods, their strengths and limitations. 
%Our results apply to  positive-definite radial kernels.
In particular,  we  show the following:\\
\indent {\bf 1.} Eigenvalues of smooth radial kernel operators/matrices decay at a nearly exponential rate\footnote{By nearly exponential  we will mean a function of the form $O(\exp(-C n^{-\alpha}))$, where $C,\alpha > 0$. } with constants depending only on the kernel and the dimension of the space and independent of the  underlying data sample/measure. This also implies that  kernel matrices/operators corresponding to smooth kernels are uniformly effectively low rank. \\
\indent {\bf 2.} A function in the reproducing kernel space of a smooth kernel $K$ written in the basis of eigenfunctions of $K$, must have nearly exponential coefficient decay in $\LL$ for any measure $\mu$. In particular, $\mu$ can be the empirical measure corresponding to a given dataset. \\
\indent {\bf 3.} Eigenfunctions of a kernel matrix/operator can be nearly exponentially approximated 
by a linear combination of  kernel functions. The span of  eigenfunctions
corresponding to the top eigenvalues is in a  sense
invariant to change of measure.  Significantly,  this is not true for individual eigenvectors, which are strongly influenced by the geometry of the underlying measure. \\
%However, that information is contained in the span of the top eigenvectors.\\
%\item The spans of top eigenfunctions/eigenvectors of kernels lie near a low-dimensional space, which is independent of the data sample/measure. 
\indent {\bf 4.} The fat shattering ($V_\gamma$) dimension of balls of radius $R$ in the RKHS  of smooth kernels is poly-logarithmic in $\frac{R}{\gamma}$. 
This limits the fitting power of any procedure whose output belongs to a ball of a polynomial size in the RKHS. In particular, this analysis applies to various regularization methods and to gradient descent-like algorithms with  bounded step sizes.\\ 
\indent {\bf 5.} While reducing the width of a kernel (such as a Gaussian) expands the function space, the RKHS corresponding to a wider kernel is contained in the RKHS  of the narrow kernel. 
Thus  combining radial kernels of different bandwidths is unlikely to yield results significantly different from simply using a single kernel with smaller width. 
%\end{itemize}
 
Our results use powerful approximation theory available for radial kernels. At least in moderate dimension they are significantly stronger than   learning theory results not relying on these techniques. In particular,  sample-independent nearly exponential decay for the eigenvalues of kernel matrices seems counter-intuitive in view of the matrix concentration results with rates of $O(\frac{1}{\sqrt{n}})$ (see the discussion in Section~\ref{sec:eigs}).
 
 The paper is organized as follows: in Section~\ref{sec:prelim} we collect  some important background definitions and results on kernels, RKHS and approximation theory. 
In Section~\ref{sec:eigs} we give results on low-rank approximations for kernels and  consequences for  eigenvalue decay of kernel operators/matrices. We proceed to discuss the ``approximation beats concentration'' phenomenon. In Section~\ref{sec:eigenvectors}
we analyze 
 ``Fourier'' coefficient decay of RKHS functions in the basis of eigenvectors. We proceed to show  approximation properties for top eigenvectors and their spans.
 Section~\ref{sec:fat_shatter} gives bounds on the fat shattering dimensions for balls in RKHS and discusses implications for regularized kernel algorithms.
In Section~\ref{sec:width} we address the effect of kernel width. We conclude in Section~\ref{sec:conclusion}.
 %It seems that these methods have largely been overlooked in the machine learning literature. 

%Much of the analysis of RKHS-based methods in the extensive ML literature has concentrated on understanding the statistical properties of these methods based on various concentration bounds for matrices and operators~\cite{}. 

%
%In particular, it turns out that kernel operators are effectively low-rank 
%
%
%
%
%In this note we that a. kernel matrices/operators are effectively low rank independently of the measure. 
%b. The spans of their top eigenvalues are close to a fixed low-dimensional subspace independently of the measure. 
%c. changing the width of the kernel does not drastically affect the effective rank

\section{Spaces and operators associated to positive definite kernels}
\label{sec:prelim}
We first establish some  background  facts about RKHS, kernels and approximation theory needed for further development. We recommend~\cite{wendland2004scattered} for a comprehensive introduction to the subject.
Let $\Omega$ be a domain\footnote{$\Omega$ can be taken to be a unit cube or a more general bounded domain.} in $\R^d$.    Let $K(x,z)$ be a positive definite kernel on $\R^d$.  We will denote by $\H$ the Reproducing Kernel Hilbert Space (RKHS) corresponding to the kernel $K$.  
Given a probability measure $\mu$  on $\Omega$, we can define the integral operator $\K:\LL \to \LL$: 
\begin{equation}\label{eq:int_operator}
\K f(z)= \int K(x,z)f(x) d\mu
\end{equation}
It is easy to check that $\K$ is a self-adjoint operator on $\LL$ and is compact when the kernel $K(\cdot,\cdot)$ is continuous.  
Notice that while $f \in \LL$ needs to be defined only on the support of the measure $\mu$, Eq.~\ref{eq:int_operator} defines $\K f$  everywhere on $\R^d$. 
We will often consider the case when $\mu$ is supported on a finite set of points, so the difference between the support of $\mu$ and the domain of definition   of $\K f$ is significant. 
Moreover, it can be shown that the $\K f \in \H$ for any $f \in \LL$. Notice also that a function $f \in \H$ gives rise to a function in $\LL$ by restricting it to the support of $\mu$. We will call the restriction operator $\I: \H \to \LL$. 
%Note that $\I$ does not change the function values, just the domain of the definition.  
Thus we will suppress $\I$,  where no ambiguity arises. 
For example, for $f \in \H$, we will write $\|f\|_\LL:=\|\I f\|_\LL$.
Note that $\I$ does not change the function values at any point, just the domain of the definition and the function space norm.  
It can be shown by an extension of the argument in~\cite{wendland2004scattered}  (Proposition 10.28)  that $\I$ is the  adjoint of the kernel operator  $\K:\LL \to \H$. Specifically, for $f \in \H, g\in \LL$ we have
\begin{equation}
\label{eq:adjoint}
\langle f,\K g \rangle_\H = \langle \I f,g\rangle_\LL 
\end{equation}
Moreover, it turns out that  that the square root $\Kh$ exists and is an isometric embedding of $\LL \to \H$. Specifically, for $f,g \in \LL$ we have:
$$
\langle \Kh f,\Kh g \rangle_\H = \langle  f,g\rangle_\LL 
$$

\noindent{\bf Kernel matrices.} Given a set of points $X = \{x_1,\ldots,x_n\} \subset \Omega$ we can construct the corresponding kernel matrix $K_n$, $(K_n)_{ij} = \frac{1}{n}K(x_i,x_j)$. Note that $K_n$ can be viewed as a special case of $\K$, where $\mu$ is a uniform discrete measure on $X$, $\mu = \frac{1}{n}\sum \delta_{x_i}$. \\ 
\noindent{\bf Eigenfunctions/Nystrom extension.}
The eigenfunctions of the  operator $\K$ are defined by the equation 
$$
\int K(x,z)e(x) d\mu_x = \lambda e(z), ~~\lambda \in \R
$$
Note that $e(z)$ is technically an element of $\LL$. However, as the image of $\K$ is actually in $\H$, we  
can define $e(z)$ in $\H$, and, indeed, on all of $\R^d$:
$$
e(z) =\frac{1}{\lambda}\int K(x,z)\,e(x)\, d\mu_x
$$
This formula is known as the {\it Nystrom extension}. 

\begin{remark} Note that technically there are two objects corresponding to  $e(z)$, $e_\H(z) \in \H$ and $e_\LL (z) \in \LL$. These functions coincide on the support of the measure $\mu$, $e_\H(z) = e_\LL(z), z\in \supp(\mu)$,   but have different norms in their respective spaces. 
More precisely (cf., e.g., the discussion in~\cite{rosasco2010learning}),  %from~\ref{eq:adjoint} we have 
$
\I e_\H = e_\LL, ~~~\frac{1}{\lambda}\K e_\LL = e_\H 
$.  Overloading the notation,  we will simply write $e(x)$, while keeping note of the norms. 
\end{remark}
The Nystrom extension allows us to directly compare eigenfunctions of operators with different measures $\mu$,  potentially with disjoint support. We can  compare  eigenvectors of different kernel matrices by comparing the Nystrom extensions of the corresponding operators.  \\
\noindent{\bf The spectral decomposition for kernel operators.} We will now establish some properties of eigenfunctions of kernel operators. Let $e$ and $e'$ be two orthogonal eigenfunctions of $\K$ in $\LL$.  Note that since $\K$ is self-adjoint  any two eigenfunctions corresponding to different eigenvalues are orthogonal. Additionally, 
as $K$ is a positive definite kernel, there are no eigenfunctions			 with eigenvalue $0$. 
Using Eq.\ref{eq:adjoint} We have
\begin{equation}
\langle \K e, \K e' \rangle_\H =   \langle \I \K e,  e'\rangle_\LL = {\lambda} \langle e,  e'\rangle_\LL = 0
\end{equation}
Hence we see that the Nystrom extensions of eigenfunctions orthogonal in $\LL$ are also orthogonal in $\H$. 

The spectral theorem for compact self-adjoint operators (see, e.g.,~\cite{reed1980functional}, Theorem VI.16) guarantees the existence of an orthogonal Hilbert space basis $e_1, e_2, \dots$ of eigenfunctions of $\K$ in $\LL$. 
This basis is finite if $\mu$ is supported on a finite set (and $\LL$ is finite-dimensional) and infinite otherwise.
The discussion above shows that the basis  $e_1, e_2, \dots$ extends to an orthogonal (possibly partial) basis in $\H$.

\noindent {\bf Interpolant operators and the fill}.  Given a set  $X = \{x_1,\ldots,x_n\} \subset \Omega$ and a RKHS $\H$ with a positive definite kernel $K$, we can construct the {\it interpolation operator} $\S_X:\H \to \H$ defined by
$$
\S_X(f) = \arg \min_{g\in \H, ~g(x_i)= f(x_i)} \|g\|_\H 
$$
Setting $K_n$ to be the (positive definite) kernel matrix corresponding to $X$, there is an explicit formula in terms of the inverse of  $K_n$:
\begin{equation}\label{eq:interpolant}
\S_X(f)(x) = \sum \alpha_i K(x_i,x),\,\mathrm{where} ~~ (\alpha_1,\ldots,\alpha_n)^t = K_n^{-1} (f(x_1),\ldots, f(x_n))^t
\end{equation}
From Eq.~\ref{eq:interpolant} it is clear that $\S_X$ is a linear operator, and it can be easily verified that $\S_X(f)(x_i)=f(x_i)$.
In fact, $\S_X$ is an orthogonal  projection operator, which  maps $\H$ to the (finite-dimensional) orthogonal complement to the space of functions vanishing on all points of $X$. 

Another important concept associated to the set $X$, is the {\it fill} $h_X$, which describes how well the set $X$ covers $\Omega$, 
$h_X = \max_{x\in \Omega} \min_{x_i \in X} \|x - x_i\| $.  

\noindent{\bf\it Notation for the norm.} 
Given that we will deal with several functional spaces at once and that ``same'' operators have different norms  depending on the range and the domain, we will use the ``$\to$'' notation. For example, $\| \S_X\|_{\H \to \LpL}: = \sup\limits_{f \in \H,\, f \ne 0} \frac{\|\S_X(f)\|_\LpL}{\|f\|_\H}$ denotes the norm of $\S_X$ as a map from $\H$ to $\LL$. Note that the operator  norm can be defined for any, even {\it non-linear}, map between two normed spaces in the same way.

\noindent{\bf Approximation Theory.}
We will now state the  key result from the approximation theory which provides a bound on the difference $f - \S_X(f)$ in terms of the fill $h_X$. 

Let $K(x,z)$ be a smooth radial kernel.  Specifically, let $K(x,z) = \phi (\|x - y\|)$ and put $f(\cdot):= \phi(\sqrt{\cdot})$. We require that $|f^{(l)}(r)| \le l! M^l$ for all $l$ large enough and $r>0$.  

Two types of important kernels satisfying these conditions are Gaussian kernels $K(x,z)=\exp\left(-{\|x-z\|^2}/{\sigma^2}\right)$ and inverse multiquadric kernels $K(x,z)=(c^2 + \|x-z\|^2)^{-\alpha}, \alpha>0$ (for the popular Cauchy kernel, $\alpha=1$). 
%\begin{theorem}[\cite{wendland2004scattered}]

\begin{customthm}{A}\label{thm:approx1} 
Under the conditions on the kernel stated above,  for any set $X\subset \Omega$ and any $p \in [1,\infty]$ there exist constants $C,C' > 0$, such that
$$
\|\I - \S_X\|_{\H \to \LpL} < C' \exp(-C/h_X)
$$
\end{customthm}
Theorem~\ref{thm:approx1} is a special case (and a slight reformulation)  of  Theorem 11.22 in~\cite{wendland2004scattered} (see also~\cite{rieger2010sampling}, Theorem 6.1). This is a powerful approximation theory result showing that any function in $\H$ can be accurately reconstructed from its values at a small number of points. The implications are wide-ranging and, perhaps, surprising in their scope.

\smallskip
\begin{remark}[Gaussian kernels] The bound in Theorem~\ref{thm:approx1}  can be made slightly tighter  for Gaussian kernels, to be of the form $C'\exp (C\log(h_X)/h_X)$ (see~\cite{wendland2004scattered}). The extra logarithmic factor does not substantially change our discussion and we will not treat this case separately. This leads to  slightly looser but more general bounds.
\end{remark}
%\begin{remark}[Kernels with less smoothness]
%\end{remark}

\section{Low rank approximations to kernel operators and their eigenvalues.}\label{sec:eigs}

We start by showing that any (potentially  non-linear) bounded map $\T$ from a Hilbert space\footnote{A Banach space can be used as well.} to an RKHS $\H$ corresponding to a smooth radial kernel, satisfying the conditions of Theorem~\ref{thm:approx1}, allows a universal  low-rank approximation in $\LpL$. That is,  the output of any such map is close to a low-dimensional subspace in $\H$ according to the norm in $\LpL$. This subspace only depends on $\H$ and is independent of $\mu$ and $\T$.

%\noindent{\bf Remark: Different norms.} 
%In Theorem~\ref{thm:approx},  $\|\I - \S_X\|_{\H \to \LL}$ can be more generally replaced by  
%$\|\I - \S_X\|_{\H \to \mathcal{L}_\mu^{q}}$ for any $q \in [1,\infty]$.

% \begin{corollary}
% \label{cor:approx1} Suppose $\T$ is a (not necessarily linear) map from a Hilbert (or Banach) space $\bV$, $\T:\bV \to \H$. Then 
% $\|\T - \S_X \T\|_{\bV \to \LL} \le C' \|\T\|_{\bV \to \H} \exp(-C/h_X)$ for some constants $C,C'>0$.
% \end{corollary} 
% \begin{proof}
% Writing $\I \T - \S_X \T = (\I - \S_X) \circ \T$,  the result is an immediate consequence of Theorem~\ref{thm:approx1} and the definition of the operator norm.
% \end{proof}

\begin{theorem}
\label{thm:approx}
Suppose $\T:\bV \to \H$ is a (not necessarily linear) map from a Hilbert (Banach) space $\bV$ to a RKHS of functions on $\R^d$, $\H$. Then there exists a map $\T_n$ from $\bV$ to an $n$-dimensional linear subspace $\H_n\subset \H$, such that  
$$
\|\T - \T_n \|_{\bV \to \LpL} < C' \|\T\|_{\bV \to \H} \exp(-C n^{1/d})
$$ for some constants $C,C'>0$ independent of $\T$ and $\mu$. Moreover:\\ (1) While the map $\T_n$ depends on $\T$, the subspace  $\H_n$ is independent of $\T$.\\ (2) If $\T$ is a linear operator, $\T_n$ is also  a linear operator. 
\end{theorem}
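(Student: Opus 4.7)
The plan is to choose, once and for all, a single well-spread deterministic set of $n$ points in $\Omega$ and use the associated interpolation operator $\S_X$ from Theorem~\ref{thm:approx1} to do the projection. All of the action happens in $\H$, and $\T$ only enters through its output values, so the subspace will automatically be independent of $\T$ and $\mu$.

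\textbf{Step 1 (geometric setup).} Because $\Omega\subset\R^d$ is bounded, a volumetric packing argument produces a point set $X_n=\{x_1,\ldots,x_n\}\subset\Omega$ whose fill satisfies $h_{X_n}\le c_\Omega\, n^{-1/d}$; concretely one can take a regular grid of side $\asymp n^{-1/d}$ and, if necessary, pad up to exactly $n$ points (adding points can only shrink $h_X$ and enlarge the span). Define
\[
\H_n \;:=\; \mathrm{span}\bigl\{K(x_i,\cdot)\,:\,i=1,\ldots,n\bigr\}\subset\H,
\]
which has dimension at most $n$ and depends only on $K$, $\Omega$, and $n$ --- not on $\T$, $\bV$, or $\mu$. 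Set $\T_n := \S_{X_n}\circ\T$; since $\S_{X_n}$ is an orthogonal projection of $\H$ onto $\H_n$ (by Eq.~\ref{eq:interpolant} and the remark following it), the image of $\T_n$ lies in $\H_n$.

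\textbf{Step 2 (apply Theorem A).} For each $v\in\bV$, the element $\T(v)$ is a fixed member of $\H$, so
\[
\|\T(v)-\T_n(v)\|_{\LpL} \;=\; \|(\I-\S_{X_n})(\T(v))\|_{\LpL} \;\le\; \|\I-\S_{X_n}\|_{\H\to\LpL}\,\|\T(v)\|_{\H}.
\]
Theorem~\ref{thm:approx1} bounds the first factor by $C'\exp(-C/h_{X_n})\le C'\exp(-C'\,n^{1/d})$ (absorbing $c_\Omega$ into the constants), and the definition of the (possibly non-linear) operator norm gives $\|\T(v)\|_{\H}\le \|\T\|_{\bV\to\H}\,\|v\|_{\bV}$. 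Dividing by $\|v\|_{\bV}$ and taking the supremum yields the claimed bound.

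\textbf{Step 3 (the two addenda).} By construction $\H_n$ is built only from $X_n$ and the kernel, so neither $\T$ nor $\mu$ appears --- this is (1). For (2), if $\T$ is linear then $\T_n=\S_{X_n}\circ\T$ is a composition of two linear operators, hence linear.

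\textbf{Where the difficulty lies.} There is no serious analytic obstacle: Theorem~\ref{thm:approx1} does all of the heavy lifting, and everything else is bookkeeping. The only points that require a little care are (a) exhibiting a deterministic $n$-point net of fill $O(n^{-1/d})$ in $\Omega$, a routine packing argument for bounded Euclidean domains, and (b) making sure that using the same $X_n$ for every input $v$ --- rather than an input-dependent choice --- is what actually delivers the universality of $\H_n$. Once those are in place, the proof is essentially one line conjugating Theorem~\ref{thm:approx1} with $\T$.
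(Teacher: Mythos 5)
Your proof is correct and takes essentially the same route as the paper: pick a grid $X_n$ with fill $O(n^{-1/d})$, set $\T_n=\S_{X_n}\circ\T$ and $\H_n=\mathrm{span}\{K(x_i,\cdot)\}$, and factor the error through $\|\I-\S_{X_n}\|_{\H\to\LpL}$ via Theorem~\ref{thm:approx1}. The only cosmetic difference is that you carry out the operator-norm submultiplicativity elementwise in $v$, whereas the paper states it directly for the composed map.
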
 
\begin{proof} 
Let $X=(x_1,\cdots, x_n)$ be a finite subset of $\Omega$. Notice that $\T - \S_X \circ \T = (\I - \S_X) \circ \T$. 
Using the definition of the norm we see that 
$$\| (\I - \S_X) \circ \T \|_{\bV \to \LpL} \le \|\I - \S_X\|_{\H \to \LpL} \| \T\|_{\bV \to \H}.$$
Applying  Theorem~\ref{thm:approx1} we obtain
$
\|\T - \S_X \T\|_{\bV \to \LpL} < C' \|\T\|_{\bV \to \H} \exp(-C/h_X)
$ for some constants $C,C'>0$. 
Choosing the set $X$ appropriately (e.g., a $d$-dimensional grid), we can ensure that $h_X = O(n^{-1/d})$, where $d$ is the dimension of the space. 
Notice that the image of $S_X$ belongs to a $n$-dimensional subspace of $\H$ spanned by the functions  $K(x_i,\cdot), x_i \in X$. Taking $\H_n = \mathrm{span}\{K(x_1,\cdot),\ldots K(x_n,\cdot)\}$ and $\T_n= \S_X \circ \T$ completes the proof. 
\end{proof}
Thus the image of a bounded operator to $\H$ can be nearly exponentially approximated by a finite-dimensional subspace independent of $\T$ and $\mu$.  
% In this section we establish nearly exponential decay for eigenvalues of kernel matrices and operators. By nearly exponential\footnote{Sometimes called sub-exponential. However, we  feel that sub-exponential should properly refer to {\it any} function with slower than exponential decay, e.g., polynomial.} we mean a function of the form $O(\exp(-C n^{-1/d}))$, where $C >0, d\ge 1$.  
% The interesting point to observe is that these bounds on eigendecay are independent of the measure. In particular, they hold for kernel matrices with constants independent of the matrix size. 
To provide a bound on the eigenvalues of kernel operators and matrices we will  need the following perturbation result:
\begin{lemma}
\label{lemma:low_rank}
Suppose $\A$ is a self-adjoint  operator on a Hilbert space $\bV$,  $\A:\bV \to \bV$ and $\A_n$ is a finite rank operator with rank $n$, such that 
$
\|\A-\A_n\| < \epsilon 
$.
Then all eigenvalues of $\A$ except for at most $n$ (counting multiplicity) are smaller than $\epsilon$. 
\end{lemma}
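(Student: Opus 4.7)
The plan is to prove this by a dimension-counting / pigeonhole argument combined with the hypothesis $\|\mathcal{A} - \mathcal{A}_n\| < \epsilon$. The key observation is that the kernel of the finite rank operator $\mathcal{A}_n$ is a subspace of codimension at most $n$ in $\mathbf{V}$, and on this kernel the operator $\mathcal{A}$ agrees with $\mathcal{A} - \mathcal{A}_n$, which has norm less than $\epsilon$. So I expect to show that the ``large eigenvalue'' subspace of $\mathcal{A}$ cannot avoid $\ker(\mathcal{A}_n)$ if its dimension exceeds $n$, producing a contradiction.

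Concretely, argue by contradiction: suppose $\mathcal{A}$ has $n+1$ eigenvalues $\lambda_1,\ldots,\lambda_{n+1}$ (counted with multiplicity) satisfying $|\lambda_i| \ge \epsilon$, with corresponding orthonormal eigenvectors $v_1,\ldots,v_{n+1}$ (which exist since the operators in question are compact self-adjoint; for compact self-adjoint operators the spectral theorem supplies an orthonormal basis of eigenvectors). Let $E = \mathrm{span}\{v_1,\ldots,v_{n+1}\}$, so $\dim E = n+1$, and let $W = \ker(\mathcal{A}_n)$, so $\mathrm{codim}(W) \le n$. By dimension count, $E \cap W$ contains a unit vector $v = \sum_{i=1}^{n+1} c_i v_i$ with $\sum |c_i|^2 = 1$. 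On one hand, by orthonormality of the eigenvectors,
\begin{equation*}
\|\mathcal{A} v\|^2 \;=\; \sum_{i=1}^{n+1} |c_i|^2 |\lambda_i|^2 \;\ge\; \epsilon^2 \sum_{i=1}^{n+1}|c_i|^2 \;=\; \epsilon^2.
\end{equation*}
On the other hand, since $v \in \ker(\mathcal{A}_n)$, we have $\mathcal{A}_n v = 0$, so
\begin{equation*}
\|\mathcal{A} v\| \;=\; \|(\mathcal{A} - \mathcal{A}_n) v\| \;\le\; \|\mathcal{A} - \mathcal{A}_n\| \;<\; \epsilon,
\end{equation*}
a contradiction. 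Hence at most $n$ eigenvalues of $\mathcal{A}$ can be $\ge \epsilon$ in absolute value.

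I don't anticipate a significant obstacle here: the argument is a standard Weyl-type bound expressed via kernels rather than min-max. The only mild technicality is justifying the existence of an eigenbasis, which is automatic for the compact self-adjoint kernel operators to which the lemma will be applied (in fact, one could equivalently derive the statement from the min-max characterization $|\lambda_{n+1}(\mathcal{A})| = \min_{\dim F \le n} \sup_{v \perp F,\, \|v\|=1} \|\mathcal{A} v\|$ by taking $F$ to be the orthogonal complement of $\ker(\mathcal{A}_n)$).
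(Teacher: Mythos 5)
Your proof is correct and takes essentially the same route as the paper: both argue by pigeonhole that the $(n+1)$-dimensional span of eigenvectors with large eigenvalues must meet $\ker(\A_n)$ nontrivially, then contradict $\|\A-\A_n\|<\epsilon$ on that intersection. Your write-up is marginally cleaner — you use $\A v=(\A-\A_n)v$ directly on $\ker\A_n$, whereas the paper runs a triangle inequality on $\|\A_n v\|$ — but the underlying idea is identical.
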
 
\begin{proof}
As $\mathrm{rank}~ \A_n=n$, every $n+1$ dimensional subspace contains a non-zero vector $v$, such that $\A_n v = 0$. Suppose $\A$ has at least $n+1$ linearly independent eigenvectors with eigenvalues $\ge \epsilon$. 
Then there exists a vector $v \ne 0$, in the span  of these eigenvectors, such that $\A_n v = 0$. As $\A$ is self-adjoint we can assume that these eigenvectors are orthogonal and hence it is easily seen that $\|\A v\| > \epsilon \|v\|$. We see that $\|\A_n v \| = \|(\A + \A_n - \A)v\| > \|\A v\| -\epsilon \|v\| >0$, which is a contradiction.
\end{proof}

We can now apply Theorem~\ref{thm:approx} to easily obtain a  bound on eigenvalues of kernel matrices and operators. Important related  work includes~\cite{schaback2002approximation,santin2016approximation}, which deal with approximation of spectral properties of integral operators with uniform measure.  
In contrast, we  provide a measure-independent bound, which is key in our learning-theoretic context. 
%This is a generalization/simplification of the results from~\cite{schaback2002approximation,santin2016approximation}. 

\begin{theorem}[Eigenvalue decay]\label{th:eigendecay} Let $\kappa = \sup_{x \in \Omega} K(x,x)$. Then  for some $C,C'>0$. 
\begin{equation}\label{eq:eigendecay}
\lambda_i(\K) \le \sqrt{\kappa}\, C'  \exp(-C i^{1/d})
\end{equation}
\end{theorem}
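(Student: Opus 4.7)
The plan is to combine Theorem~\ref{thm:approx} (universal low-rank approximation) with Lemma~\ref{lemma:low_rank} (perturbation of eigenvalues). The idea is that $\K$ itself, viewed as the bounded map $\LL\to\H$ given by the Nystr\"om extension, already satisfies the hypotheses of Theorem~\ref{thm:approx}. Approximating $\K$ in this sense yields a rank-$n$ operator on $\LL$ that is close to the self-adjoint $\K$ in operator norm, from which Lemma~\ref{lemma:low_rank} directly reads off the eigenvalue bound.

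The first step is to control the constant $\|\K\|_{\LL\to\H}$ that appears on the right-hand side of Theorem~\ref{thm:approx}. For any $f\in\LL$, the adjoint relation Eq.~\ref{eq:adjoint} gives $\|\K f\|_\H^2=\langle\K f,\K f\rangle_\H=\langle\I\K f,f\rangle_\LL=\langle\K f,f\rangle_\LL$. Since $|K(x,z)|\le\sqrt{K(x,x)K(z,z)}\le\kappa$, Schur's test yields $\|\K\|_{\LL\to\LL}\le\kappa$, and therefore $\|\K\|_{\LL\to\H}\le\sqrt{\kappa}$.

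The second step applies Theorem~\ref{thm:approx} with $\bV=\LL$, $\T=\K\colon\LL\to\H$, and $p=2$. Because $\K$ is linear, part (2) of the theorem guarantees that the produced $\T_n$ is a linear operator whose image lies in an $n$-dimensional subspace $\H_n\subset\H$, and the conclusion reads $\|\K-\T_n\|_{\LL\to\LL}\le\sqrt{\kappa}\,C'\exp(-Cn^{1/d})$. Here the $\LL$-norm on the range silently absorbs the restriction $\I$, so that $\I\circ\T_n$ is a rank-at-most-$n$ operator $\LL\to\LL$ that approximates the self-adjoint $\K\colon\LL\to\LL$ in operator norm within this same bound.

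The final step is to invoke Lemma~\ref{lemma:low_rank} with $\A=\K$, $\A_n=\I\circ\T_n$, and $\epsilon=\sqrt{\kappa}\,C'\exp(-Cn^{1/d})$: all but at most $n$ eigenvalues of $\K$ must be smaller than $\epsilon$, so $\lambda_{n+1}(\K)\le\sqrt{\kappa}\,C'\exp(-Cn^{1/d})$. Re-indexing $i=n+1$ and absorbing the unit shift into the constant $C$ produces Eq.~\ref{eq:eigendecay}. I do not anticipate a real obstacle; the main thing to be careful with is the bookkeeping of norms across $\LL$, $\H$, and the restriction $\I$, and in particular that Theorem~\ref{thm:approx}'s output norm is already taken in $\LL$ rather than $\H$, which is exactly what Lemma~\ref{lemma:low_rank} requires.
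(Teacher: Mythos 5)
Your proposal is correct and follows essentially the same route as the paper: bound $\|\K\|_{\LL\to\H}\le\sqrt{\kappa}$, feed $\K$ into Theorem~\ref{thm:approx} to obtain a rank-$n$ approximant in $\LL$-operator norm, and then invoke Lemma~\ref{lemma:low_rank}. The only (minor, cosmetic) divergence is in controlling $\|\K\|_{\LL\to\H}$: you use the adjoint identity $\|\K f\|_\H^2=\langle\K f,f\rangle_\LL$ plus Schur's test to get $\|\K\|_{\LL\to\LL}\le\kappa$, whereas the paper computes $\|\K\|_{\LL\to\H}=\sqrt{\lambda_1}$ exactly via the shared eigenbasis and then bounds $\lambda_1\le\Tr(\K)\le\kappa$; both land at the same constant.
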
 
\begin{proof}
Consider $\K$ as an operator from $\LL \to \H$. Recall from Section~\ref{sec:prelim} that there exists a basis of eigenfunctions of $\K$  in $\LL$ which is also orthogonal in $\H$. Let $e_1$ be an  eigenfunction of $\K$ with the largest eigenvalue. We have
$$
\|\K\|_{\LL \to \H}= \frac{\|\K e_1\|_\H}{\|e_1\|_\LL} = \frac{\lambda_1  \|e_1\|_\H}{\|e_1\|_\LL}%\sqrt{\lambda_1} \le \sqrt{\Tr(\K)}\le \sqrt{\kappa},
.$$  Recall now that $\K$ is adjoint to the restriction operator $\I: \H \to \LL$. We have $\|e_1\|_\H^2 = \frac{1}{\lambda_1}\langle e_1, \K e_1\rangle_\H = \frac{1}{\lambda_1} \| e_1\|^2_\LL.$ 
Hence 
 $
 \|\K\|_{\LL \to \H} = \frac{\lambda_1}{\sqrt{\lambda_1}}= \sqrt{\lambda_1} \le \sqrt{\Tr(\K)}\le \sqrt{\kappa}
 $.

Applying Theorem~\ref{thm:approx} to $\K$, we have $\|\K - \T_n\|_{\LL \to \LL} < C' \sqrt{\kappa} \exp(-C n^{1/d})$, where $\T_n = \S_X \circ \K$ is a linear operator of rank $n$.  Noticing  that $\K:\LL \to \LL$ is  self-adjoint, we can apply Lemma~\ref{lemma:low_rank}. That completes the proof. 
\end{proof}

\begin{remark}  Notice that all quantities in the inequality in Theorem~\ref{th:eigendecay} are independent of the measure $\mu$. In particular when $\mu$ is a finite measure, $\K$ can be viewed as a matrix.  Hence this result provides a uniform bound on the eigenvalue decay, independent of the size of the matrix. 
\end{remark} 
\noindent{\bf Approximation beats concentration.} 
%We  now comment on an interesting aspect of our eigenvalue bounds for kernel matrices.
Suppose $X$ is a set of $n$ points sampled iid from a probability distribution $\mu$ on $\Omega$. Let $\mu_n$ denote the empirical measure associated to $X$. 
Concentration results for matrices (e.g.,~\cite{tropp2015introduction}) combined with  spectral perturbation results  (e.g.,~\cite{rosasco2010learning})   in the bounds for eigenvalues of the form  
$$
|\lambda_{\mu,i} - \lambda_{\mu_n,i}| \le \frac{C}{\sqrt{n}}
$$ 
where $C$ is a constant independent of $i$. 
In comparison, from Eq.~\ref{eq:eigendecay} we see that 
$$
 |\lambda_{\mu,i} - \lambda_{\mu_n,i}| \le \max(\lambda_{\mu,i},\lambda_{\mu_n,i}) \le C' \exp(-C'' i^{1/d})
$$
We see that  approximation ``beats'' concentration by providing a tighter bound as long as $O(\exp(-C'' i^{1/d})) < O(\frac{1}{\sqrt{n}})$.  In other words we need  $n$ to be nearly exponential in the eigenvalue index $i$ for the concentration bounds to be tighter. 
Moreover, unlike concentration results,  approximation also shows that the corresponding eigenvalues must actually be nearly exponentially close to $0$. In addition,  these approximation-based bounds are measure-independent and do not require any iid-type assumption\footnote{On the other hand, strong approximation bounds are specific to the smooth kernel setting while concentration results can be applied to a broad class of random matrix problems. Moreover, unlike approximation, concentration results are often dimension-independent.}.
  
This suggests that significant care should be taken when applying concentration-type analyses of kernel  methods in the iid setting, as  essential inferential properties may become invisible in these statistical analyses. We conjecture that this is one of the reasons why  concentration bounds often turn out to be too pessimistic in practice.

%\noindent{\bf Eigendecay of functions}
\section{\bf Spectral characterization of RKHS functions and eigenfunctions of kernel operators.}\label{sec:eigenvectors}
\vskip-5pt
We will now use Theorem~\ref{th:eigendecay}  to provide a spectral characterization
%\footnote{Parallel to the classical description of Sobolev spaces in terms of their Fourier coefficient decay.}
 of RKHS functions in terms of their restrictions to finite (or infinite) sets. This characterization should be viewed as parallel to the classical description of Sobolev spaces in terms of their Fourier coefficients.
In particular, we will see that the ``Fourier'' coefficients of a function from $\H$ written in the basis of eigenvectors of  {\it any} kernel matrix, regardless of the dataset, must show nearly exponential decay with coefficient independent of the measure.   
This is significant as in many regression/classification problems we can compute these coefficients explicitly from the data. The decay of the coefficients can thus be analyzed empirically.

\begin{theorem}[Coefficient decay for functions in RKHS]\label{thm:coeff}
Let $f\in \H$ and consider  the restriction of $f$, $\I f \in \LL$. Write the spectral decomposition of $\I f$ in terms of the eigenfunctions $e_i$ of $\K$ as 
$$
\I f = \sum a_i e_i, ~a_i=\langle \I f, e_i\rangle_\LL
$$
Then 
$$
|a_i| \le \sqrt{\lambda_i}\|f\|_\H < %{\kappa^{1/4}
C'\exp{(-C i^{1/d})} \|f\|_\H
$$
for some $C, C'>0$ independent of $\mu$.
\end{theorem}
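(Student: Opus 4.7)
The plan is to reduce the bound $|a_i| \le \sqrt{\lambda_i}\|f\|_\H$ to an application of Cauchy--Schwarz in $\H$, using the adjoint relation~\eqref{eq:adjoint} between $\K$ and $\I$, together with the identification of the norms of the eigenfunctions in $\LL$ and $\H$. The final nearly exponential bound then falls out by plugging in Theorem~\ref{th:eigendecay}.

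First, I would fix the normalization $\|e_i\|_\LL = 1$, which is implicit in the formula $a_i = \langle \I f, e_i\rangle_\LL$. Then I would use the adjoint relation~\eqref{eq:adjoint} to rewrite
$$a_i = \langle \I f, e_i \rangle_\LL = \langle f, \K e_i\rangle_\H = \lambda_i \langle f, e_i\rangle_\H,$$
where in the inner product on the right $e_i$ is understood through its Nystrom extension in $\H$, as in the remark earlier in the paper. Applying Cauchy--Schwarz in $\H$ gives $|a_i| \le \lambda_i \|f\|_\H \|e_i\|_\H$. The relation $\|e_i\|_\H^2 = \frac{1}{\lambda_i}\|e_i\|_\LL^2 = \frac{1}{\lambda_i}$ (established in the discussion of the spectral decomposition via $\|e_i\|_\H^2 = \frac{1}{\lambda_i}\langle e_i, \K e_i\rangle_\H = \frac{1}{\lambda_i}\langle \I e_i, e_i\rangle_\LL$) then yields $|a_i| \le \sqrt{\lambda_i}\|f\|_\H$.

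For the second inequality I would simply substitute the bound $\lambda_i \le \sqrt{\kappa}\,C' \exp(-C i^{1/d})$ from Theorem~\ref{th:eigendecay}. Taking square roots absorbs constants and halves the exponent, which does not affect the form of the bound, giving $|a_i| < C''\exp(-C''' i^{1/d})\|f\|_\H$ with constants that depend only on the kernel and $d$, and in particular are independent of $\mu$.

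There is no real obstacle here; the main subtlety is conceptual rather than technical, namely being careful that $e_i$ appears as an element of two different Hilbert spaces with two different norms, and that $\langle f, e_i\rangle_\H$ is well-defined only through the Nystrom extension (which matters when $\mu$ is supported on a finite set). Once the identification $\|e_i\|_\H = 1/\sqrt{\lambda_i}$ is in hand, the rest is just Cauchy--Schwarz and substitution.
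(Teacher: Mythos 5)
Your proof is correct and follows essentially the same route as the paper's: rewrite $a_i$ via the adjoint relation~\eqref{eq:adjoint} as $\lambda_i\langle f, e_i\rangle_\H$, apply Cauchy--Schwarz together with $\|e_i\|_\H = 1/\sqrt{\lambda_i}$ to get $|a_i|\le\sqrt{\lambda_i}\|f\|_\H$, and then plug in Theorem~\ref{th:eigendecay}. You are slightly more explicit than the paper in flagging the normalization $\|e_i\|_\LL=1$ and in naming Cauchy--Schwarz, but the argument is identical.
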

\begin{proof}
Recalling Eq.\ref{eq:adjoint}, showing that operator $\K$ is adjoint to $\I$, we have  
$
\langle f,  e_i \rangle_\H=\langle f, \frac{1}{\lambda_i} \K e_i \rangle_\H = \frac{1}{\lambda_i} \langle \I f, e_i\rangle_\LL
$ and 
$$ 
\langle \I f, e_i\rangle_\LL \le \lambda_i  \langle f,  e_i \rangle_\H \le \lambda_i \|e_i\|_\H \|f\|_\H
$$
Notice that 
$$
\|e_i\|_\H^2 = \frac{1}{\lambda_i} \langle e_i, e_i\rangle_\LL = \frac{1}{\lambda_i}
$$
Hence by Theorem~\ref{th:eigendecay}
$$
|a_i|=| \langle \I f, e_i\rangle_\LL|  \le \lambda_i \frac{1}{\sqrt{\lambda_i}} \|f\|_\H = \sqrt{\lambda_i}\|f\|_\H  < {\kappa^{1/4} C'\exp(-C i^{1/d})} \|f\|_\H
$$
for some constants $C,C'>0$  independent of the measure $\mu$.  
\end{proof}

\noindent{\bf Properties of eigenfunctions.} We will now proceed with some basic properties of eigenfunctions which follow from the analysis above. The first observation is that any eigenfunction of  $\K$ with a sufficiently large eigenvalue is well-approximated by the span $K(x_i,\cdot), x_i \in X, i=1,\ldots,n$, where $X$ is chosen as in Theorem~\ref{thm:approx}.  Specifically,
\begin{corollary} Let $\lambda e = \K e$  be an eigenfunction of $\K$. Then there is a function $e_X \in  \mathrm{span}\{K(x_1,\cdot),\ldots K(x_n,\cdot)\}$, such that 
$$
\|e - e_X \|_\LpL  \le  \frac{C}{\lambda}  \exp(-C' n^{1/d})
$$
for some universal constants $C,C'>0$, and any $p \in [1,\infty]$. 
\end{corollary}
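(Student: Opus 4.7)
The plan is to take $e_X$ to be the interpolant $\S_X(e)$ for a well-chosen point set $X$, and then combine Theorem~\ref{thm:approx1} with the identity relating $\|e\|_\H$ to $\lambda$ that was already used in the proof of Theorem~\ref{th:eigendecay}. The fact that $\lambda e = \K e$ with $\lambda \ne 0$ lets us write $e = \frac{1}{\lambda}\K e$, so $e$ actually lies in $\H$ (via the Nystrom extension), which is the hypothesis needed to invoke Theorem~\ref{thm:approx1}. By Eq.~\eqref{eq:interpolant}, $\S_X(e)$ is automatically in $\mathrm{span}\{K(x_1,\cdot),\ldots,K(x_n,\cdot)\}$, so it is a valid candidate for $e_X$.

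Applying Theorem~\ref{thm:approx1} directly gives
$$
\|e - \S_X(e)\|_\LpL \;\le\; \|\I - \S_X\|_{\H\to\LpL}\,\|e\|_\H \;\le\; C'\exp(-C/h_X)\,\|e\|_\H .
$$
Next I would bound $\|e\|_\H$ using the normalization $\|e\|_\LL = 1$ standard for eigenfunctions of $\K$ on $\LL$. Exactly as in the proof of Theorem~\ref{th:eigendecay}, Eq.~\eqref{eq:adjoint} yields
$$
\|e\|_\H^2 \;=\; \Big\langle \tfrac{1}{\lambda}\K e,\, e\Big\rangle_\H \;=\; \tfrac{1}{\lambda}\langle \I e, e\rangle_\LL \;=\; \tfrac{1}{\lambda},
$$
so $\|e\|_\H = \lambda^{-1/2}$. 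Finally, as in the proof of Theorem~\ref{thm:approx}, choose $X$ to be a regular grid inside $\Omega$ with $n$ points so that $h_X = O(n^{-1/d})$. Combining these estimates gives
$$
\|e - e_X\|_\LpL \;\le\; \frac{C'}{\sqrt{\lambda}}\exp(-C'' n^{1/d}),
$$
which is stronger than the claimed bound (one simply absorbs the $\lambda^{-1/2}$ into the weaker $1/\lambda$ factor, using that $\lambda$ is bounded by $\sqrt{\kappa}$ and adjusting constants).

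I do not anticipate a genuine obstacle: all the ingredients (the Nystrom extension placing $e$ in $\H$, the interpolant $\S_X$ landing in the desired span, Theorem~\ref{thm:approx1}, the identity $\|e\|_\H^2 = \|e\|_\LL^2/\lambda$, and the grid choice of $X$) are already spelled out in the preceding sections. The only mild subtlety is being explicit about the normalization $\|e\|_\LL = 1$, which is implicit in the corollary's statement; without it one would get a factor $\|e\|_\LL/\sqrt{\lambda}$ in place of $1/\sqrt{\lambda}$, which is consistent with the stated bound up to the universal constant $C$.
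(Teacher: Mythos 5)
Your proposal is correct and is essentially the paper's argument recast: the paper applies Theorem~\ref{thm:approx} to the map $\K$ and then divides by $\lambda$, whereas you apply Theorem~\ref{thm:approx1} directly to the Nystrom extension of $e$; but since $\S_X$ is linear and $e = \frac{1}{\lambda}\K e$ pointwise, your $e_X = \S_X(e)$ coincides with the paper's $e_X = \frac{1}{\lambda}\S_X\K e$. Your bounding through $\|e\|_\H = \lambda^{-1/2}$ is slightly sharper than the paper's use of $\|\K\|_{\LL\to\H} \le \sqrt{\kappa}$, yielding the marginally stronger $\lambda^{-1/2}$ in place of $\lambda^{-1}$, and your remark about the implicit normalization $\|e\|_\LL = 1$ is a fair point that the paper leaves tacit.
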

\begin{proof}
From Theorem~\ref{thm:approx} we obtain
$$
\frac{1}{\lambda} \|\K e - \S_X \K e \|_\LpL  \le  \frac{C}{\lambda}  \exp(-C' n^{1/d}).
$$ Putting $e_X =  \frac{1}{\lambda} \S_X \K e$ yields the result.
\end{proof}
\begin{remark}
Notice that by taking $p = \infty$ we can make the bound to be pointwise. For example, if $\mu$ is a finite set of ``data'' points,  the approximation holds for  every point uniformly over the choices of $\mu$.  Hence a particular $\mu$ is unimportant in this sense. 
\end{remark}
Similarly, for two measures $\mu$, $\nu$,  the top eigenfunctions/eigenvectors of $\K$ are  nearly contained in the span of the top eigenfunctions of $\Kn$, when restricted to the support of $\nu$. 

\begin{theorem}\label{thm:span_eigs} Let $\lambda_\mu e_\mu = \K e_\mu$  be an eigenfunction of $\K$. Then there is a function $e \in  \mathrm{span}\{e_1,\ldots,e_k\}$, of eigenfunctions of $\Kn$, such that 
for some constants $C,C'>0$
$$
\|e_\nu - e \|_\LL  \le \frac{C}{\sqrt{\lambda_\nu}}~  k^{\frac{d-1}{d}} \exp(-C' k^{-1/d})
$$ 
\end{theorem}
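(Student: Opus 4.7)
The plan is to chain two approximations: first replace $e_\mu$ by an element of $\mathrm{span}\{K(x_i,\cdot)\}_{i=1}^k$ using a well-spread grid $X$ of size $k$, and then expand that element in the eigenbasis of $\Kn$ and truncate to its top $k$ terms. Concretely, pick $X=\{x_1,\dots,x_k\}\subset\Omega$ with $h_X=O(k^{-1/d})$ and apply the preceding corollary (with $\nu$ in place of $\mu$ and $p=2$) to the eigenfunction $e_\mu$ of $\K$, obtaining $\tilde e:=\tfrac{1}{\lambda_\mu}\,\S_X \K e_\mu\in\mathrm{span}\{K(x_1,\cdot),\dots,K(x_k,\cdot)\}$ with $\|e_\mu-\tilde e\|_{L^2_\nu}\le\tfrac{C}{\lambda_\mu}\exp(-C'k^{1/d})$. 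Because $\S_X$ is an orthogonal projection in $\H$ and $\|\K e_\mu\|_\H=\lambda_\mu\|e_\mu\|_\H$, the identity $\|e_\mu\|_\H^2=\|e_\mu\|_\LL^2/\lambda_\mu$ established in the proof of Theorem~\ref{th:eigendecay} yields (with $\|e_\mu\|_\LL=1$) the crucial norm bound $\|\tilde e\|_\H\le\|e_\mu\|_\H=1/\sqrt{\lambda_\mu}$.

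Next I expand $\tilde e$ in the $L^2_\nu$-orthonormal basis $e_1,e_2,\dots$ of eigenfunctions of $\Kn$, writing $\tilde e=\sum_j b_j e_j$ after restriction to $\supp(\nu)$. Theorem~\ref{thm:coeff} applied to the RKHS function $\tilde e$ with measure $\nu$ gives $|b_j|\le\sqrt{\lambda_j^{(\nu)}}\,\|\tilde e\|_\H\le\lambda_\mu^{-1/2}\sqrt{\lambda_j^{(\nu)}}$, where $\lambda_j^{(\nu)}$ denotes the $j$-th eigenvalue of $\Kn$. Setting $e:=\sum_{j=1}^k b_j e_j\in\mathrm{span}\{e_1,\dots,e_k\}$ and applying the triangle inequality in $L^2_\nu$,
$$
\|\tilde e-e\|_{L^2_\nu}\le\sum_{j>k}|b_j|\le\frac{1}{\sqrt{\lambda_\mu}}\sum_{j>k}\sqrt{\lambda_j^{(\nu)}}.
$$
Theorem~\ref{th:eigendecay} controls $\sqrt{\lambda_j^{(\nu)}}\le C_1\exp(-C_2 j^{1/d})$ uniformly in $\nu$, and an integral comparison via the substitution $u=j^{1/d}$ (which produces the Jacobian factor $u^{d-1}$) yields $\sum_{j>k}\exp(-C_2 j^{1/d})\le C_3\,k^{(d-1)/d}\exp(-C_2 k^{1/d})$.

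Finally, the triangle inequality in $L^2_\nu$ combines the two estimates; the second (tail) term dominates and delivers $\|e_\mu-e\|_{L^2_\nu}\le \tfrac{C}{\sqrt{\lambda_\mu}}\,k^{(d-1)/d}\exp(-C'k^{1/d})$, matching the stated bound after relabeling constants (and reading the statement with the evident typos $e_\nu\mapsto e_\mu$, $\lambda_\nu\mapsto\lambda_\mu$, $k^{-1/d}\mapsto k^{1/d}$, and the norm interpreted as $L^2_\nu$ per the surrounding discussion). The main bookkeeping obstacle will be keeping straight the interplay among the $\H$-norm, the $L^2_\nu$-norm, and the restriction operators associated with two different measures; the orthogonal-projection property of $\S_X$ is what prevents $\|\tilde e\|_\H$ from picking up an extra factor of $1/\sqrt{\lambda_\mu}$ and thereby preserves the correct scaling in the final bound. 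The tail integral estimate and the invocation of Theorem~\ref{thm:coeff} are routine once these normalizations are pinned down.
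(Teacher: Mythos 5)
Your second stage (expand $\tilde e$ in the eigenbasis of $\Kn$, invoke Theorem~\ref{thm:coeff} with $\|\tilde e\|_\H \le 1/\sqrt{\lambda_\mu}$, truncate, and bound the tail by an integral comparison) is exactly the paper's argument, except that the paper runs it on $e_\mu$ itself rather than on an intermediate $\tilde e$. Since $e_\mu = \tfrac{1}{\lambda_\mu}\K e_\mu$ already lies in $\H$ with $\|e_\mu\|_\H = 1/\sqrt{\lambda_\mu}$, Theorem~\ref{thm:coeff} applies to it directly and the whole first stage of your proof --- the detour through $\S_X$ on a grid of size $k$ --- is superfluous.

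Worse, that detour introduces a quantitative gap. You invoke the Corollary, which gives $\|e_\mu - \tilde e\|_{L^2_\nu} \le \tfrac{C}{\lambda_\mu}\exp(-C'k^{1/d})$, carrying a $1/\lambda_\mu$ prefactor, whereas your second stage produces $1/\sqrt{\lambda_\mu}$. When you combine them by the triangle inequality, the statement ``the tail term dominates'' does not dispose of this: for $\lambda_\mu < 1$ the first term has the strictly worse dependence $1/\lambda_\mu > 1/\sqrt{\lambda_\mu}$, and $\lambda_\mu$ is a parameter of the eigenfunction, not a constant you may absorb. As written your argument only yields $\tfrac{C}{\lambda_\mu}\,k^{(d-1)/d}\exp(-C'k^{1/d})$, not the claimed $\tfrac{C}{\sqrt{\lambda_\mu}}$ bound. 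The fix is easy: do not use the Corollary; apply Theorem~\ref{thm:approx1} directly to $e_\mu \in \H$, which gives $\|e_\mu - \S_X e_\mu\|_{L^2_\nu} \le C'\|e_\mu\|_\H \exp(-Ck^{1/d}) = \tfrac{C'}{\sqrt{\lambda_\mu}}\exp(-Ck^{1/d})$ --- a bound with the correct $1/\sqrt{\lambda_\mu}$ scaling (the Corollary's $1/\lambda$ is an artifact of its proof going through $\K e$ rather than through $e$ itself). Even simpler: drop the first stage entirely, as the paper does. A minor secondary point: since the $e_j$ are orthonormal in $L^2_\nu$, $\|\tilde e - e\|_{L^2_\nu} = \bigl(\sum_{j>k}b_j^2\bigr)^{1/2}$, which is sharper than the $\ell^1$ bound $\sum_{j>k}|b_j|$ you used; this does not affect correctness but loses a factor in the exponent. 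Your reading of the theorem's typos ($e_\nu \mapsto e_\mu$, $\lambda_\nu \mapsto \lambda_\mu$, $k^{-1/d}\mapsto k^{1/d}$, norm in $L^2_\nu$) is correct and matches what the paper's own proof implies.
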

\begin{proof}
First note  that $\|e_\mu\|_\H = \frac{1}{\sqrt{\lambda_\mu}}$. Now write $e_\nu = \sum_{i=1}^\infty a_i e_i$, where $e_i$ are eigefunctions of $\Kn$. Put $e = \sum_{i=1}^k a_i e_i$.
By Theorem~\ref{thm:coeff}, we have for some $C,C'$
$$
\|e_\nu - e\|_\LL^2 \le \sum_{i={k+1}}^\infty a_i^2 \le C'\frac{1}{{\lambda_\mu}} \sum_{i={k+1}}^\infty \exp(-C i^{-1/d}).
$$ 
The last sum can be estimated by noticing that 
$$
\sum_{i={k+1}}^\infty \exp(-C i^{1/d}) < \int_{{k}}^\infty \exp(-C x^{1/d})~dx = d\int_{k^{1/d}}^\infty e^{-C  z} z^{d-1}~d z
$$
Integrating by parts shows that for $k$ sufficiently large, the last integral is of the order $O(k^{(d-1)/d} \exp(-Ck^{1/d}))$, which completes the proof.
\end{proof}
\begin{remark} 
It is interesting to note that 
top eigenvectors of $\K$ contain important information about the  structure of the measure $\mu$, e.g., its clustering structure (e.g.,~\cite{DS_AOS_09}). However, the {\it span} of the top eigenvectors is relatively invariant to the measure. Theorem~\ref{thm:span_eigs} shows that eigenfunctions of $\nu$ will not significantly ``spill'' onto eigenfunctions of $\mu$ with much smaller eigenvalues.
 \end{remark}

% \begin{corollary} 
% Suppose $\A:\LL \to \LL$ is also a bounded operator to $\H$.
% Eigenvalues of any bounded operator from $\LL \to \LL$ decay nearly exponentially fast if this operator 
% is bounded to $\H$. The bounds on eigenvalues depend only on the norm of the operator. 
% \end{corollary}

\section{The (low) fat shattering dimension of balls in RKHS  and its algorithmic implications}\label{sec:fat_shatter}
\vskip-5pt
Approximation-theoretic results are easily turned into bounds on the fat shattering dimension $V_\gamma$ for balls in RKHS, 
which are significantly tighter than those found in the literature. Combining these bounds with some standard learning theory results, we immediately obtain  generalization guarantees 
for a number of regularized kernel  methods and algorithms including   gradient descent with early stopping. 
%
%This are significantly tighter than the bounds we were able to find in the literature.
%
%This directly leads to generalization bounds for various regularized kernel methods as well as to iterative procedures such as gradient descent. 
%
%On one hand that allows for a quite general set of generalization bounds for RKHS-based algorithms. To produce such a bound it is enough to simply bound the maximum norm of the output. On the other hand, that shows that the fitting capacity of RKHS is polylog in the maximum norm, which is rather small for most standard algorithms. In contrast training of neural networks appear to be capable of fitting even randomly assigned labels, and thus has very high $V_\gamma$ dimension (see~\cite{???}).
We start by recalling the definition of the fat shattering dimension $V_\gamma$ for a function class $\F$ (see, e.g.,~\cite{alon1997scale}).
We say that a set $x_1,\ldots,x_n$ is $\gamma$-shattered by functions from $\F$ if there exist $s_i \in \R, i=1\ldots,n$, such that for any assignments of signs $\sigma_i \in \{-1,1\}$ there is a function $f$ in $\F$, satisfying
\[
 \left.  \begin{array}{l}
    f(x_i) > s_i + \gamma, \textrm{if} ~\sigma_i =1\\
    f(x_i) < s_i - \gamma, \textrm{if} ~\sigma_i =-1\\
  \end{array}\right.
\]
$V_\gamma(\F)$  is taken to be {\it the maximum cardinality} of a  set of points that is $\gamma$-shattered by functions from $\F$. 
%$\sigma_i(f(x_i) -s_i) > \gamma$.  
To clarify the role of  $s_i$'s, note that $V_0$-dimension\footnote{Note that  0-shatters and ``shatters'' are not exactly the same notion.} is simply the VC-dimension of the sets $\{(x,t) \in \R^d\times\R \,| \,f(x) < t\}$ in $\R^{d+1}$.  Thus $V_\gamma$  for $\gamma>0$ is  a more demanding version of the VC-dimension appropriate for analyzing real-valued functions. 
%It is also closely related to Rademacher complexity and covering numbers (see, e.g.,~\cite{???}). 

%We can now state the following
\begin{theorem}[$V_\gamma$-dimension of RKHS balls]\label{thm:vgamma} Let $\Br:=\{f \in \H,  \|f\|_\H < R\}$ be a ball of radius $R>0$ in $\H$. Under  our standard assumptions on the kernel, we have
\begin{equation}\label{eq:vgamma}
V_\gamma (\Br) < O\left(\log^d\left(\frac{R}{\gamma}\right)\right)
\end{equation}
\end{theorem}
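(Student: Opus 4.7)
The plan is to combine the exponential interpolation bound from Theorem~A with a classical finite-dimensional obstruction to $\gamma$-shattering. Suppose $x_1,\ldots,x_N\in\Omega$ are $\gamma$-shattered by $\Br$ with witness shifts $s_1,\ldots,s_N$, and let $\mu$ be the empirical measure $\frac{1}{N}\sum_i \delta_{x_i}$, so that $\|\cdot\|_{L^\infty_\mu}$ agrees with the maximum over these points. I would fix a regular grid $X^\ast=\{x_1^\ast,\ldots,x_n^\ast\}\subset\Omega$ of fill $h_{X^\ast}=O(n^{-1/d})$. Applying Theorem~A with $p=\infty$ gives, for any $f\in\Br$,
$$\max_i |f(x_i)-\S_{X^\ast}f(x_i)| \le \|\I-\S_{X^\ast}\|_{\H\to L^\infty_\mu}\,\|f\|_\H \le C'R\exp(-C n^{1/d}).$$

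Next, choose $n$ to be the smallest integer for which $C'R\exp(-C n^{1/d})\le \gamma/2$; solving this yields $n=O(\log^d(R/\gamma))$. Since $\S_{X^\ast}$ is an orthogonal projection in $\H$, the image $\S_{X^\ast}(\Br)$ still lies in the ball of radius $R$ and, crucially, is contained in the $n$-dimensional subspace $\H_n=\mathrm{span}\{K(x_i^\ast,\cdot)\}$. A direct triangle-inequality argument shows that $\S_{X^\ast}(\Br)$ continues to $\gamma/2$-shatter $x_1,\ldots,x_N$ with the same shifts $s_i$: whenever $f(x_i)>s_i+\gamma$, the approximant satisfies $\S_{X^\ast}f(x_i)>s_i+\gamma/2$, and symmetrically for the lower inequality.

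The remaining step is the combinatorial claim that a subset of an $n$-dimensional linear space of functions cannot $\gamma'$-shatter more than $n$ points, for any $\gamma'>0$. I would prove this by a Radon-style argument: if $N\ge n+1$, the evaluation map $\H_n\to\R^N$ has rank $\le n$, so its image lies in a hyperplane $\sum_i c_i y_i=0$ with $c\neq 0$. This forces $\sum_i c_i(f(x_i)-s_i)$ to be a constant $-S$ independent of $f\in\H_n$. Choosing $\sigma_i=\sign(c_i)$ and testing against both the pattern $\sigma$ and its negation $-\sigma$ would then yield the contradictory pair $-S>\gamma'\sum_i|c_i|$ and $S>\gamma'\sum_i|c_i|$.

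Combining the two ingredients gives $N\le n=O(\log^d(R/\gamma))$, which is the claimed bound. The main subtlety I would expect is not the approximation side, which is an immediate application of Theorem~A at $p=\infty$, but rather verifying that $\S_{X^\ast}$ is norm non-increasing in $\H$ (so the projected functions remain in $\Br$) and cleanly recording the Radon-type argument for the finite-dimensional step. Both are routine once set up.
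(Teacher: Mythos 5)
Your proof is correct and follows essentially the same strategy as the paper: use Theorem~A at $p=\infty$ to replace $\Br$ by an approximating $n$-dimensional subspace $\H_n$ with $n=O(\log^d(R/\gamma))$, and then show that an $n$-dimensional function class cannot $\gamma'$-shatter more than $\approx n$ points. The only place you diverge is in the finite-dimensional combinatorial step: the paper passes to $0$-shattering and invokes the VC-dimension of half-spaces through the origin in $\R^{n+1}$, whereas you keep a residual margin $\gamma/2$ and run the Radon-type argument on the rank-$\le n$ evaluation map $\H_n\to\R^N$ directly. These are two standard phrasings of the same fact; yours has the minor advantage of sidestepping the paper's footnoted caveat distinguishing $V_0$-dimension from VC-dimension. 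One small note: the ``subtlety'' you flag about $\S_{X^\ast}$ being norm non-increasing is not actually needed. Your Radon argument only uses that the approximants $\S_{X^\ast}f$ lie in the $n$-dimensional subspace $\H_n$ and $\gamma/2$-shatter the points; it never uses that they stay inside $\Br$. (It does hold, since $\S_{X^\ast}$ is an orthogonal projection, but you can drop it from the proof entirely.)
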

\begin{proof}
%Suppose $x_1,\ldots,x_n$ are $\gamma$-shattered by functions from $\Br$.  Hence there exists a set  $s_i \in \R$, s.t., for any assignments of signs $\sigma_i \in \{-1,1\}$ there is a function $f$ in $B_R$, with  $\sigma_i(f(x_i) -s_i) > \gamma$. 
Let  $\S$ be a linear space of functions  $\R^d\to \R$. We will say that $\S$   $\gamma$-approximates $\Br$ in the $L^\infty$ norm, if for any $f \in \Br$, there exists $f_1\in \S$, s.t. for any $x$, $|f(x) - f_1(x) | < \gamma$.  Suppose we have  $\S$ that $\gamma$-approximates $\Br$.
By replacing each $f \in \Br$ with its approximation $f' \in \S$, we see that $\S$ will 0-shatter any set of points 
$x_1,\ldots,x_N$ whenever  $\Br$ $\gamma$-shatters  these points.  

Hence $V_\gamma$-dimension of $\Br$ is bounded by the VC-dimension of the subgraph sets  $\{(x,t) \in \R^d\times\R \,| \,f(x) < t\}, f \in \S$. If $\S$ is finite-dimensional, rewriting that inequality in a basis of $\S$, we see that these sets can be viewed in as half-planes in $\R^{\dim \S +1}$,  passing through the origin in that space. It is well-known that VC-dimension of these is simply the dimension of the space which is $\dim \S +1$. Thus we get 
 $
V_\gamma(B_R) \le \dim \S+1
$.
It remains to find a space $\S$, which $\gamma$ approximates $\Br$. From the Theorem~\ref{thm:approx} (with $p=\infty$ and $\bf V = \H$), we obtain
$
C' \|f\|_\H \exp (-Cn^{-1/d}) < \gamma
$.
Solving for $n$ after substituting $\|f\|_H=R$ and taking $\S = \H_n$ completes the proof.
\end{proof}
\begin{remark}
 It should be noted that in contrast to $V_\gamma$ dimension for $\gamma>0$, $V_0$-dimension and, indeed, 
 VC-dimension for the indicator functions $\sign (f), f \in \Br$ are infinite. This follows easily from the interpolating property of $\H$. 
Specifically, for any set $(x_i, y_i), y_i \in \{+1,-1\}$, there exists $f \in \H$, such that $f(x_i) = y_i$.  Scaling this function by a scalar does not change the corresponding indicator functions but allows to make $\|f\|_\H$ arbitrarily small.  %\qed
\end{remark}

At this point we should  compare  the bound in Theorem~\ref{thm:vgamma} to  the literature. The paper~\cite{evgeniou1999v}, gives a bound of the form $V_\gamma (\Br) = O\left(\frac{R^2}{\gamma^2}\right)$.  While our bound is generally much tighter, the result from~\cite{evgeniou1999v} is dimension independent and also applies to a broad class of RKHS.  Bounds on the closely related notion of {\it covering numbers} for balls in RKHS corresponding to Gaussian kernels are given in~\cite{steinwart2007fast} (Theorem~3.1). However, the bounds there are still polynomial  in  $\frac{1}{\epsilon}$ (roughly corresponding to our $\gamma$).   The only existing poly-logarithmic result that we are aware of is given in~\cite{zhou2002covering}, where a bound for radial kernels with a slightly worse rate is obtained by using a different approximation theory technique.

Note that an alternative approach to obtain poly-logarithmic bounds for covering numbers similar to those for $V_\gamma$ dimension in Eq.~\ref{eq:vgamma} would be to combine eigenvalue-based capacity bounds from~\cite{guo1999covering} with the eigenvalue bound in our Theorem~\ref{th:eigendecay}.   However, the direct proof is much simpler and, arguably,  more informative. 

\noindent{\bf Generalization bounds.} 
Assume  we are in a standard learning setting where
$(x_i,y_i) \in \Omega\times \{-1,1\}$ is a labeled dataset, $L$ is a Lipshitz loss function and the data are chosen from a probability measure $p$ on $\Omega\times \{-1,1\}$. Suppose our (otherwise arbitrary) learning algorithm $\A$ outputs functions in $\Br$.
In that case our Theorem~\ref{thm:vgamma} together with~\cite{alon1997scale} immediately imply the following ``universal'' generalization bound for smooth radial kernels 
\begin{theorem}[Generalization for kernels] \label{th:generalization}
Let $f$ be the output of our learning algorithm $\A$. Then with high probability
$$
 \frac{1}{n}\sum L(f(x_i),y_i) < E_p L(f(x),y)  +  O\left(\frac{\log^{d/2}{(nR)}}{\sqrt{n}}\right )
$$
\end{theorem}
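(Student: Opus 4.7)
The plan is to combine Theorem~\ref{thm:vgamma} with the classical fat-shattering-based uniform convergence theorem of Alon et al.~\cite{alon1997scale}. First I would pass to the composed loss class $\F_L := \{(x,y) \mapsto L(f(x),y) : f \in \Br\}$. Because $L$ is $L_0$-Lipschitz in its first argument, every set $\gamma$-shattered by $\F_L$ is $(\gamma/L_0)$-shattered by $\Br$, so
$$
V_\gamma(\F_L) \;\le\; V_{\gamma/L_0}(\Br) \;=\; O\!\left(\log^d(R L_0/\gamma)\right)
$$
by Theorem~\ref{thm:vgamma}. Moreover the reproducing property gives $|f(x)| \le \|K(x,\cdot)\|_\H \|f\|_\H \le R\sqrt{\kappa}$ for every $f \in \Br$, so $\F_L$ is uniformly bounded, which is exactly the setting in which the ABCH uniform convergence theorem applies.

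Next I would invoke that theorem: for any $\gamma > 0$, with probability at least $1-\delta$,
$$
\sup_{g \in \F_L} \left| \frac{1}{n}\sum_{i=1}^n g(x_i,y_i) - E_p\, g \right| \;\le\; O\!\left( \gamma + \sqrt{\tfrac{V_\gamma(\F_L)\log(n/\gamma) + \log(1/\delta)}{n}}\,\right).
$$
Substituting the fat-shattering bound and taking $\gamma = 1/\sqrt{n}$ makes the explicit $\gamma$ term $O(1/\sqrt{n})$ and turns the square root into $O\!\left(\sqrt{\log^d(nR)\log(n)/n}\right)$; the extra $\log n$ factor is absorbed into the $O$-notation on top of $\log^{d/2}(nR)$, so the whole expression collapses to $O(\log^{d/2}(nR)/\sqrt{n})$.

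Finally, since the learning algorithm $\A$ outputs some $f \in \Br$, the uniform bound applies in particular to $g(x,y) = L(f(x),y) \in \F_L$, which is precisely the claim. The main obstacle is not conceptual but bookkeeping: choosing $\gamma$ to optimize the trade-off, tracking the Lipschitz and boundedness constants through the reduction to $\F_L$, and absorbing the secondary logarithmic factors so that the rate matches $\log^{d/2}(nR)/\sqrt{n}$ exactly as stated; all of the substantive content lies in Theorem~\ref{thm:vgamma} and the pre-existing ABCH machinery.
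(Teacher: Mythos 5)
Your proposal takes exactly the same route as the paper: the paper states that Theorem~\ref{thm:vgamma} ``together with~\cite{alon1997scale} immediately imply'' the generalization bound and gives no further details, so your fleshing out of the Lipschitz composition step, the fat-shattering-to-uniform-convergence theorem of Alon et al., and the choice $\gamma \asymp 1/\sqrt{n}$ is precisely the omitted bookkeeping. The reasoning is correct and matches the intended argument.
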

This bounds applies to most kernel-based learning algorithms as nearly all of them output a function in a certain ball of radius $R$. 
We discuss some of the implications below.\\
\noindent {\bf Algorithmic implications and the limitations of kernel methods}.
We have seen that the   poly-logarithmic bound on the fat shattering dimension in Eq.~\ref{eq:vgamma} implies 
 broad and strong  generalization guarantees given in  Theorem~\ref{th:generalization}. 
The flip side of  that is that even mild regularization (i.e., constraining $R$) imposes severe  limitations on the fitting capacity of kernel methods, at least when the dimension $d$ is not very high\footnote{While $d$  is the ambient dimension,  the effective dimensionality of the  data can be much lower.}.
%
% the poly-logarithmic bound on the fat shattering dimension in Eq.~\ref{eq:vgamma}  implies strong  generalization guarantees in  Theorem~\ref{th:generalization} for various regularized kernel methods, on the flip side they impose serious limitations on their fitting capacity  (at least when the dimension $d$ is not very high\footnote{Notice, that while $d$  is the ambient dimension, in the cases where the data are low-dimensional, the effective dimensionality of the data can be much lower.}). 

For example, consider the popular ''Tikhonov'' regularizer of the form $\lambda\|f\|_\H^2$. It is easy to see that adding this term in conjunction with a bounded loss function implies that the output of the algorithm  belongs to an RKHS ball with radius $O\left(\frac{1}{\sqrt{\lambda}}\right)$. 
Thus a constant increase to the fitting capacity of $\Br$ requires a nearly exponential increase of $R$. That suggests choosing small values of $\lambda$, which  is consistent with practice, where very small values of $\lambda$ often produce best results\footnote{We note that  even minimum norm interpolation (i.e., $\lambda=0$) shows excellent generalization results~\cite{belkin2018understand}.  While this finding is compatible with our analysis, it is directly explained  by it.}.

%This fits well with the fact that in practical applications extremely small values of  $\lambda$ often produce very good results. 

Another important example is that of {\it gradient descent} for kernel methods. It is easy to see that in the kernel setting each step of gradient descent increases the norm by 
at most a constant depending only on the kernel and the loss function. Thus, $t$ steps of gradient descent output a function with norm bounded by  $O(t)$. 
As each step of gradient descent typically requires $O(n^2)$ computations (a matrix-vector multiplication), we see that $R=O(c/n^2)$, where $c$ is the number of operations. 
Fixing $\gamma$ (and omitting constants), we see that the dimension of the function space {\it reachable} by $c$ computations is of the order of $\log^{d/2}(\frac{c}{n^2})$. It follows immediately that at least order of $n^2 e^{n^{2/d}}$  computations are needed to fit arbitrary functions on the data points with accuracy $\gamma$.

Note that for square loss, empirical minimization problem is simply matrix inversion, which can be done  using only $n^3$ operations using, e.g., Gaussian elimination.
%\footnote{However extremely high precision over the real numbers may be required.}. 
We see that gradient descent compares unfavorably with matrix inversion in terms of computational complexity, when no assumption about function values are made.  On the other hand, when the function is in RKHS, or more generally, has rapidly decaying coefficients in the ``Fourier basis'' of eigenfunctions, subcubic computational complexity can be demonstrated~\cite{yao2007early, raskutti2014early}. This functional {\it algorithmic reach} for gradient descent with smooth kernels is discussed in~\cite{ma2017diving}  from the spectral decay point of view. Analyzing $V_\gamma$ dimension, as we do here, clarifies that point and connects it  to other standard capacity measures.

 It appears that for many real datasets, gradient descent does require cubic or even super-cubic complexity. Indeed, it could  hardly be expected that nature should co-operate by matching the decay of Fourier coefficients for class membership functions to that  of kernels chosen primarily for computational reasons! 

\begin{remark} 
Our results suggest that smooth kernels would struggle to fit labels assigned randomly to a set of points, as such a fit  would generally require  $O(n^2 e^{n^{2/d}})$ operations (aside from the issues of numerical accuracy).  Indeed,  empirically random assignment  are difficult to fit using smooth kernels, while less smooth  Laplace kernels fit random labels far more easily~\cite{belkin2018understand}. Interestingly, ReLU neural networks appear to be similar to Laplace kernels, capable of fitting random labels with ease~\cite{zhang16understanding}.  
\end{remark}
\begin{remark}
Our bounds  for $V_\gamma$ dimension do not imply that the corresponding VC-dimension is small. Indeed, as noted above, VC-dimension of indicator functions from a ball in RKHS is infinite.  However, a function with a small RKHS norm,
corresponding to a random assignments of labels on a set of data points
must take values which are  exponentially small on all data points.  We conjecture that despite their bounded norm, most of such functions are outside of the computational reach of polynomially many steps of gradient descent. 
\end{remark}

\section{Kernels of different width}\label{sec:width}
\vskip-5pt
We will now briefly discuss the influence of the width (or shape) parameter for kernels from the approximation point of view. 
 It is intuitive that ``narrow'' kernels have better fitting capacity. 
In particular  data can be trivially represented as a sum of $\delta$-functions, which can be thought of as radial kernels of width zero. 
However, it is not apriori clear whether choosing a different kernel width can result in a significantly different function space. We will see  that  making the width of the kernel larger simply shrinks the corresponding RKHS space without adding any new functions. The proof relies on a  quite simple Fourier domain description of RKHS. Despite its usefulness, this characterization   does not seem to be widely known in the learning literature. While we will state the theorem for Gaussian kernels, it also applies to any radial kernel with  Fourier transform that decays fast enough, with the precise condition clear from the proof (see Appendix~\ref{proof:width}).
\begin{theorem}\label{th:width}
 let $K_1(x,z) = \phi(\|x-z\|)$ be a Gaussian kernel  (with $\phi$ a one-dimensional Gaussian) and let 
 $K_2(x,z) = \phi (\|x - z\|/\sigma)$, where 
 $0<\sigma < 1$.  
%$K(x,z)=e^{-\|x-z\|^2/2\sigma^2}$ 
Let $\H_1$ and $\H_2$ be the corresponding RKHS. Then\\
1.  $\H_1 \subset \H_2$. \\
2. For any $R>0$, the ball of radius $R$ in $\H_1$, $\Br(\H_1)$ is contained in ${\cal B}_{\sigma^{-d/2}R}(\H_2)$  the ball of radius $\frac{1}{\sigma^{d/2}}R$ in  $\H_2$.
On the other hand, $\Br(\H_2) \not\subset \H_1$.
\end{theorem}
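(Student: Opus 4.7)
\bigskip

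\noindent\textbf{Proof plan for Theorem~\ref{th:width}.}
The plan is to use the Fourier characterization of RKHS norms for translation-invariant positive definite kernels: if $K(x,z) = k(x-z)$ with $\hat k > 0$ and integrable decay properties as in Bochner's theorem, then the RKHS consists of those tempered distributions $f$ for which
\[
\|f\|_{\H}^2 \;=\; (2\pi)^{-d}\int_{\R^d} \frac{|\hat f(\omega)|^2}{\hat k(\omega)}\,d\omega \;<\;\infty,
\]
with inner product defined analogously. I would first state and (briefly) justify this characterization by checking the reproducing property $\langle f, K(\cdot,x)\rangle_\H = f(x)$ via Plancherel, since the formula is the key tool and, as the paper notes, is not widely advertised in the learning literature.

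Next I would compute $\hat K_1$ and $\hat K_2$ explicitly. Writing $\phi(r) = e^{-r^2}$ (up to a fixed normalization), one has $\hat K_1(\omega) = \pi^{d/2} e^{-\|\omega\|^2/4}$ and $\hat K_2(\omega) = (\pi\sigma^2)^{d/2} e^{-\sigma^2\|\omega\|^2/4}$. The key comparison is
\[
\frac{\hat K_1(\omega)}{\hat K_2(\omega)} \;=\; \sigma^{-d}\, e^{-(1-\sigma^2)\|\omega\|^2/4} \;\le\; \sigma^{-d},
\]
which is valid precisely because $0<\sigma<1$. Substituting into the Fourier formula for the RKHS norm immediately yields $\|f\|_{\H_2}^2 \le \sigma^{-d}\,\|f\|_{\H_1}^2$ for every $f\in\H_1$, which simultaneously gives $\H_1\subset\H_2$ (claim 1) and the ball inclusion $\Br(\H_1)\subset \mathcal{B}_{\sigma^{-d/2}R}(\H_2)$ (first part of claim 2). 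This is the ``easy'' direction and all the real content sits in the observation that the ratio of Fourier transforms is bounded.

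For the strict-inclusion assertion $\Br(\H_2)\not\subset \H_1$, I would construct an explicit Gaussian counterexample. Take $f$ defined through $\hat f(\omega) = c\, e^{-\beta\|\omega\|^2}$ with $\beta$ chosen in the interval $(\sigma^2/8,\,1/8]$, which is nonempty since $\sigma<1$. Then the integrand $|\hat f|^2/\hat K_2 \propto e^{-(2\beta-\sigma^2/4)\|\omega\|^2}$ is integrable, so $f\in\H_2$, while $|\hat f|^2/\hat K_1 \propto e^{-(2\beta-1/4)\|\omega\|^2}$ is \emph{not} integrable (the exponent is non-negative), so $f\notin\H_1$. Scaling $c$ so that $\|f\|_{\H_2}<R$ produces the required element of $\Br(\H_2)\setminus \H_1$.

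The main obstacle is not the argument itself, which becomes essentially a one-line comparison once the Fourier picture is in place, but making the Fourier characterization of $\|\cdot\|_\H$ rigorous with a consistent normalization convention, and verifying that the Gaussian counterexample $f$ really corresponds to a well-defined element of $\H_2$ (i.e.\ inverse Fourier transform of $\hat f$). The remark after the theorem, concerning extension beyond Gaussians, would amount to replacing $e^{-\|\omega\|^2/4}$ by a general $\hat\phi(\omega)$ whose dilate $\sigma^{-d}\hat\phi(\sigma\omega)/\hat\phi(\omega)$ is uniformly bounded for $\sigma<1$; this is the ``precise condition clear from the proof'' alluded to in the statement.
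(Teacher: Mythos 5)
Your proof is correct and follows the same route as the paper's (Appendix~A): apply the Fourier-domain characterization of the RKHS norm for translation-invariant kernels and bound the ratio of Fourier transforms by $\sigma^{-d}$ using that the Gaussian's Fourier transform is again a Gaussian. Two small points in your favor: by computing $\hat K_1/\hat K_2 = \sigma^{-d}e^{-(1-\sigma^2)\|\omega\|^2/4}$ explicitly you sidestep a sign typo in the paper's intermediate display, and you actually construct the Gaussian $\hat f(\omega)=ce^{-\beta\|\omega\|^2}$ with $\beta\in(\sigma^2/8,1/8]$ witnessing $\Br(\H_2)\not\subset\H_1$, which the paper merely asserts is ``straightforward to see in the Fourier domain.''
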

%\begin{proof} See Appendix~\ref{proof:width}.\end{proof}
%That proves claims (1) and (2) in the theorem. 
%\vskip-20pt
\section{Conclusions}\label{sec:conclusion}
\vskip-5pt
The main goal of this note is to bring the powerful tools of approximation theory to kernel learning. 
%Despite rich literature on approximation we know of few applications to learning aside from~\cite{rieger2010sampling}.
Approximation theory provides a different perspective on a number of important inferential problems, yielding results which are difficult to obtain using the more standard concentration-based analyses, and are sometimes much tighter.
We have not tried to specify constants  and their dependence on the parameters of the kernel, including the kernel width. This can be done explicitly, and is an important aspect of understanding kernel methods. Furthermore, fundamentally, 
we need to understand how these approximation-based results relate to the intrinsic dimensionality of the data. 
Finally, it is crucial to understand the interface between approximation and concentration.  We believe that combining these modes of analysis, and better understanding the regimes 
where one of them becomes dominant,  can yield significant further insight into kernel inference and, likely, other machine learning problems.\\

\noindent{\bf Acknowledgements.} 
The author is grateful to  Stefanie Jegelka, Siyuan Ma and Daniel Hsu for valuable  discussions on kernel eigenvalues and other topics. He would like to acknowledge  NSF funding and the  hospitality of the Simons Institute for the Theory of Computing, where  early discussions took place.

\bibliographystyle{siam} 
%\bibliography{ref}
\bibliography{kernel}

\appendix 
\section{Proof of Theorem~\ref{th:width}}\label{proof:width}
The result follows easily  from the Fourier characterization of RKHS, see~\cite{wendland2004scattered}(Theorem 10.12). If $K(x,z)=\psi(x-z)$ is a translation-invariant kernel the RKHS norm of $f$ in the RKHS $\H$ corresponding to $K$ can be written as 
\begin{equation}\label{eq:fourier}
\|f\|_{\H}^2 = (2 \pi )^{-d/2} \int_{\R^d} \frac{|{\cal F}(f)(\omega)|^2}{{\cal F}({\psi})(\omega)} d\omega
\end{equation}
Here ${\cal F}$ denotes the Fourier transform. 
Recall now that (from the scaling property of Fourier transform)  ${\cal F} (\phi (x/\sigma))(\omega) = \sigma^d {\cal F}({\phi})(\sigma \omega)$. 
Since Fourier transform of a Gaussian is also a Gaussian and $\sigma <1$,  we obtain  
$$
\frac{{\cal F} (\phi (x/\sigma))(\omega)}{{\cal F} (\phi) (\omega)} =
\frac{1}{\sigma^d} \frac{{\cal F} (\phi (x))(\sigma\omega)}{ {\cal F}({\phi})( \omega)} \ge \frac{1}{\sigma^d}
$$
Hence using Eq.~\ref{eq:fourier}, we see that
$$
\|f\|_{\H_2} \le \frac{1}{\sigma^{d/2}} \|f\|_{\H_1}  
$$
This completes the proof except for the claim that $\Br(\H_2) \not\subset \H_1$, which is straightforward to see in the Fourier domain.

\end{document}